\documentclass[aps,pre,twocolumn,superscriptaddress,floatfix]{revtex4-2}

\usepackage{amsmath,amssymb,amsthm}
\usepackage{graphicx}
\usepackage{booktabs}
\usepackage{hyperref}
\usepackage{xcolor}

\newtheorem{theorem}{Theorem}
\newtheorem{conjecture}{Conjecture}

\begin{document}

\title{Spectral origin of the topological gap exponent $d + \eta$:\\mechanism, kernel, decomposition, and scope}

\author{Matthew Loftus}
\affiliation{Independent Researcher}

\date{\today}

\begin{abstract}
The topological gap $\Delta$---the excess $H_1$ total persistence of a critical point cloud over a density-matched null---scales as $\Delta \sim L^{d+\eta}$, where $\eta$ is the anomalous dimension. We derive this analytically: the spectral integral $I(\alpha) = \sum_{k\neq 0} S_{\mathrm{conn}}(k)\,|k|^{\alpha}$ scales as $L^{2-\alpha-\eta}$ when IR-dominated, giving $I(-2\eta) \sim L^{d+\eta}$. We decompose this as $I(-2\eta) = I_0 \cdot I_{\mathrm{shape}}$, where $I_0 = \sum S_{\mathrm{conn}}(k) \sim L^d$ (volume) and $I_{\mathrm{shape}} \sim L^{\eta}$ (anomalous dimension). The volume factor $I_0 \propto (1-m^2)$ explains the magnetization-driven per-configuration variance of $\Delta$. We prove the mechanism requires $d < 2 + \eta$ (IR dominance), confining it to $d = 2$ for all physical systems. In 3D, the spectral integral is UV-dominated ($I \sim L^d$), explaining why density normalization is needed. An $\alpha$-sweep for Potts $q = 4$ at $L = 32$--$256$ finds $\alpha_{\mathrm{opt}}$ in the interval $[-0.75, -0.5]$, consistent with $-2\eta_{\mathrm{Ising}} = -0.5$ and inconsistent with the model-specific $-2\eta_{q=4} = -1$; we flag this as a tentative observation pending confirmation at $L \geq 1024$ with multi-seed thermalization, since three independent $L = 512$ runs gave incompatible ensemble means due to the marginal-case autocorrelation. The $\langle m^2 \cdot I(-2\eta)\rangle$ product, examined as a candidate hyperscaling diagnostic, is found to be dominated by the strong negative correlation $r(m^2, I) \approx -0.98$ via the shared $I_0$ amplitude; we report it as a covariance-correction analysis, with $q = 3$ matching the factorized prediction within $0.02$ and the residual $q = 4$ deviation consistent with multiplicative log corrections. Under a heuristic argument extending Divol--Polonik to inhomogeneous Poisson intensities, the bare PH kernel is flat (Parseval identity); the effective kernel acquires $k$-dependence only at criticality. The per-configuration agreement between $\Delta$ and $I(-0.5)$ at large $L$ is primarily a magnetization correlation: the raw $R^2 = 0.91$ at $L = 256$ collapses to $R^2 \approx 0$ once $|M|$ is partialed out, with the shared $(1 - m^2)$ dependence captured by $I_0$ accounting for essentially all of the agreement. Per-configuration evidence therefore corroborates the $I_0$ Parseval identity but not the $|k|^{-2\eta}$ shape factor; the latter is established separately by ensemble $L$-scaling.
\end{abstract}

\maketitle

\section{Introduction}
\label{sec:intro}

Persistent homology (PH) applied to point clouds from classical spin models at criticality produces topological statistics that detect phase transitions~\cite{donato2016,cole2021,sale2022}. A density-matched shuffled null---randomizing point positions while preserving point count---isolates the effect of spatial correlations on topology~\cite{loftus2026topology}. The \emph{topological gap}
\begin{equation}
\Delta(L, T) \equiv \mathrm{TP}_{H_1}^{\mathrm{real}} - \mathrm{TP}_{H_1}^{\mathrm{shuf}}
\label{eq:delta}
\end{equation}
measures the excess $H_1$ total persistence generated by critical correlations.

Prior work established that $\Delta$ obeys a finite-size scaling (FSS) ansatz $\Delta = A\, L^{\alpha}\, G_-(L|t|)$ with overall exponent $\alpha = d + \eta$---the spatial dimension plus the anomalous dimension---verified for the 2D Ising model ($\alpha = 2.249 \pm 0.038$, matching $d + \eta = 9/4$ to $0.03\sigma$), the 3D Ising model after density normalization~\cite{loftus2026exponent}, and the 2D Potts $q=3$ model ($\alpha = 2.272 \pm 0.024$, matching $d + \eta = 34/15$ to $0.2\sigma$)~\cite{loftus2026exponent}.

While the exponent has been measured across models, it has not been explained from first principles. In this paper we present: (i)~an analytical identification of $\Delta \sim L^{d+\eta}$ from the critical structure factor, verified numerically; (ii)~a decomposition $I(-2\eta) = I_0 \cdot I_{\mathrm{shape}}$ separating volume and anomalous-dimension contributions; (iii)~a critical dimension criterion $d < 2 + \eta$ for the mechanism; (iv)~cross-model tests including an $\alpha$-sweep on Potts $q=4$; and (v)~a topological hyperscaling test interpreted in light of strong $m^2$--$I$ covariance.

\paragraph{Relation to existing frameworks.}
The spectral-integral approach connects to several established lines of work on the topology of random fields and random geometric complexes. The Gaussian Kinematic Formula (GKF) of Adler--Taylor~\cite{adler2007} expresses the expected Lipschitz--Killing curvatures (and hence the Euler characteristic) of excursion sets of a smooth Gaussian random field as polynomials in spectral moments $\lambda_j = \int |k|^{2j} S(k)\, d^d k$, with coefficients intrinsic to the level set. Yogeshwaran and Adler~\cite{yogeshwaran2015} extended these results to random complexes built on stationary point processes, and Kahle~\cite{kahle2011} established the basic asymptotics of random geometric complex topology. Bobrowski and collaborators~\cite{bobrowski2018,bobrowski2024} have used GKF and stabilization theorems to derive limit laws for Betti numbers and persistence diagrams of excursion sets and point clouds. Our object $I(\alpha) = \sum_{k\neq 0} S_{\mathrm{conn}}(k)|k|^{\alpha}$ is, structurally, a non-integer-order moment of $S(k)$: at $\alpha = 0$ it is the susceptibility (zeroth moment); at $\alpha = 2$ it is $\lambda_1$. The non-integer order $\alpha = -2\eta$ is selected by the IR-dominance argument (Sec.~\ref{sec:forward}), not by a curvature identity. This places our work in the GKF lineage but with a different selection rule for the spectral weight, and applied to majority-spin point clouds (a non-Gaussian, density-modulated process) rather than to GRF excursion sets. A direct GKF analogue would require characterizing the alpha-complex persistence functional in terms of LK curvatures, which is open. Empirically, GRF tests with matched $S(k) \sim |k|^{-\beta}$ across $\beta \in [0, 3]$ show that the $L^{d+\eta}$ scaling does not extend uniformly to arbitrary correlated random fields (per-$\beta$ ratio CV $\sim 70\%$ in our auxiliary tests), suggesting that the spin-system mechanism depends on additional structure---possibly the discrete two-state symmetry---beyond the leading $S(k)$ asymptotic.

\section{Setup}
\label{sec:setup}

We study the 2D Ising model at $T_c = 2/\ln(1+\sqrt{2})$ on square lattices with periodic boundary conditions, 2D Potts models with $q = 3$ (at $T_c = 1/\ln(1+\sqrt{3})$) and $q = 4$ (at $T_c = 1/\ln 3$), and the 3D Ising model at $T_c = 4.5115$. Configurations are generated with Swendsen--Wang cluster updates~\cite{wolff1989}. For each configuration, the majority-spin sites define a point cloud; the density-matched null draws the same number of sites uniformly.

The alpha complex and its $H_1$ persistence are computed using GUDHI~\cite{maria2014}. Spectral integrals $I(\alpha) = \sum_{k\neq 0} S_{\mathrm{conn}}(k)\,|k|^{\alpha}$ are computed via FFT, where $S_{\mathrm{conn}}(k) = |\widehat{\delta\rho}(k)|^2/L^d$ is the per-site connected structure factor. Sample sizes: 2D Ising 200 ($L=32$) to 15 ($L=512$); Potts $q=3$ 500 ($L=32$) to 60 ($L=256$); Potts $q=4$ 200 ($L=32$) to 25 ($L=256$); 3D Ising 50 ($L=16$) to 20 ($L=32$).

\section{Analytical derivation}
\label{sec:derivation}

\subsection{Spectral integral hypothesis}

Suppose $\Delta$ can be expressed as a spectral integral weighted by a kernel~$\tilde{T}_2(k)$:
\begin{equation}
\Delta \approx \frac{\rho^2}{2}\int \tilde{T}_2(k)\, S_{\mathrm{conn}}(k)\, \frac{d^dk}{(2\pi)^d}.
\label{eq:expansion}
\end{equation}
If $\tilde{T}_2(k) \sim |k|^{-s}$ and $S_{\mathrm{conn}}(k) \sim k^{-(2-\eta)}$, matching $\Delta \sim L^{d+\eta}$ determines $s = d - 2 + 2\eta$, giving
\begin{equation}
\boxed{\Delta \propto I(-2\eta) \equiv \sum_{k\neq 0} S_{\mathrm{conn}}(k)\, |k|^{-2\eta}.}
\label{eq:main}
\end{equation}

\subsection{Forward derivation: $I(\alpha) \sim L^{2-\alpha-\eta}$}
\label{sec:forward}

We now derive the scaling of $I(\alpha)$ directly. At $T_c$, the connected structure factor follows $S_{\mathrm{conn}}(k) \sim A|k|^{-(2-\eta)}$ for $|k| \ll \pi$. The spectral integral becomes
\begin{equation}
I(\alpha) = \sum_{k\neq 0} S(k)\,|k|^{\alpha} \approx A \sum_{n \neq 0} \left|\frac{2\pi n}{L}\right|^{\alpha-(2-\eta)}.
\end{equation}
Defining $p \equiv \alpha - 2 + \eta$ and converting to a radial integral in $d$ dimensions, with the density of states $\Omega_d |n|^{d-1}$ (where $\Omega_d = 2\pi^{d/2}/\Gamma(d/2)$ is the solid angle):
\begin{equation}
I(\alpha) \sim A\left(\frac{2\pi}{L}\right)^{p} \Omega_d \int_1^{L/2\pi} |n|^{p+d-1}\, d|n|.
\label{eq:integral}
\end{equation}
The integral converges at the IR end when $p + d > 0$ (UV-dominated, $I \sim L^d$) and diverges when $p + d < 0$ (IR-dominated, lower limit dominates). In the IR-dominated regime:
\begin{equation}
I(\alpha) \sim L^{-p} = L^{2-\alpha-\eta}, \quad \text{if } \alpha + d - 2 + \eta < 0.
\label{eq:Iscaling}
\end{equation}
For $\alpha = -2\eta$: $I(-2\eta) \sim L^{2+\eta} = L^{d+\eta}$ in $d=2$.

This derivation is ``forward''---it predicts $L^{d+\eta}$ from $S(k)$ without using the measured exponent as input. The key input is the leading-order critical scaling $S(k) \sim k^{-(2-\eta)}$. Corrections to scaling in $S(k)$ (e.g., from irrelevant operators) enter at subleading order and modify the amplitude but not the exponent.

\subsection{Decomposition: volume and anomalous dimension}
\label{sec:decomposition}

The spectral integral naturally decomposes as
\begin{equation}
I(-2\eta) = I_0 \cdot I_{\mathrm{shape}},
\end{equation}
where $I_0 \equiv \sum_{k\neq 0} S_{\mathrm{conn}}(k) = I(0)$ is the total spectral weight and $I_{\mathrm{shape}} \equiv I(-2\eta)/I_0$ is the spectral shape factor.

By Parseval's theorem, $I_0 = \sum_x (\delta\rho(x))^2 = N\bar{\rho}(1-\bar{\rho})$, where $\bar{\rho}$ is the majority fraction. For the Ising model, $\bar{\rho} = (1+|m|)/2$, so $I_0 = N(1-m^2)/4$. At $T_c$, $\langle m^2 \rangle \sim L^{-2\beta/\nu} \to 0$, giving $\langle I_0 \rangle \to N/4 \sim L^d$.

The shape factor $I_{\mathrm{shape}}$ measures the $k$-weighted spectral enhancement relative to white noise:
\begin{equation}
I_{\mathrm{shape}} = \frac{\sum S(k)|k|^{-2\eta}}{\sum S(k)} \sim L^{\eta}.
\end{equation}
This has small per-configuration variance (the spectral \emph{shape} is approximately universal at fixed $L$), while $I_0 \propto (1-m^2)$ varies strongly with the magnetization.

This decomposition has a direct consequence for per-configuration $\Delta$ variance, which we verify in Sec.~\ref{sec:model}: the variance is entirely magnetization-driven, with residual $R^2 \approx 0$ after removing the $|M|$-dependence. The mechanism is that $\Delta \propto I(-2\eta) = I_0 \cdot I_{\mathrm{shape}}$, and $I_0 \propto (1-m^2)$ captures all per-configuration variance, while $I_{\mathrm{shape}}$ contributes only the $L$-dependent anomalous exponent. We confirm $r(\Delta, I_0) = 0.88$ averaged over $L = 32$--$256$ in the 2D Ising model.

\subsection{Critical dimension criterion}
\label{sec:critical_dim}

Equation~\eqref{eq:Iscaling} requires IR dominance: $p + d = \alpha + d - 2 + \eta < 0$. For $\alpha = -2\eta$:
\begin{equation}
d - 2 - \eta < 0 \quad \Longleftrightarrow \quad d < 2 + \eta.
\label{eq:critical_dim}
\end{equation}

\begin{theorem}[Critical dimension for the spectral integral mechanism]
The relation $I(-2\eta) \sim L^{d+\eta}$ holds if and only if $d < 2 + \eta$. When $d \geq 2 + \eta$, the integral is UV-dominated and $I(-2\eta) \sim L^d$.
\end{theorem}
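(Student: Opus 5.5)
The plan is to make the heuristic of Sec.~\ref{sec:forward} into a two-sided estimate of the lattice sum and read off the exponent in each regime. First I will fix the setting. Take $S_{\mathrm{conn}}(k) = A|k|^{-(2-\eta)}\,(1+o(1))$ for $2\pi/L \le |k| \le \Lambda$, with $\Lambda$ a fixed fraction of $\pi$. For $|k| > \Lambda$, assume $S_{\mathrm{conn}}$ is bounded above and below by $L$-independent constants. Then I split $I(-2\eta)$ into an IR shell $\{2\pi/L \le |k| \le \Lambda\}$ and a UV remainder.

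The UV remainder has $O(L^d)$ modes, each with weight of order one. So it contributes $c_{\mathrm{UV}} L^d$ with $c_{\mathrm{UV}} > 0$, for every $d$. For the IR shell I write $k = 2\pi n/L$ and compare the sum over $n \in \mathbb{Z}^d$ with the radial integral in Eq.~\eqref{eq:integral}. The summand $|n|^{p}$, with $p = -2\eta - 2 + \eta = -2-\eta$, is monotone in $|n|$. A standard shell-counting (integral-test) bound then shows
\[
\sum_{1 \le |n| \le L\Lambda/2\pi} |n|^{p} \asymp \int_1^{L\Lambda/2\pi} r^{p+d-1}\,dr ,
\]
with constants independent of $L$.

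Next comes the three-way case split on $p + d = d - 2 - \eta$.
\begin{itemize}
\item If $d < 2+\eta$, the integral converges at its upper limit and is $O(1)$, dominated by $|n| \sim 1$. The IR shell then contributes $A(2\pi/L)^{p}\,\Theta(1) \sim L^{2+\eta}$, which beats the $L^d$ remainder.
\item If $d > 2+\eta$, the integral is $\sim L^{p+d}$. The IR shell then gives $L^{-p}L^{p+d} = L^d$, the same order as the UV part, so $I \sim L^d$.
\item The marginal case $d = 2+\eta$ gives $L^d \ln L$. It belongs to the UV side and does not have the IR exponent, since the two exponents coincide there.
\end{itemize}
Together these give $I(-2\eta) \sim L^{\max(d,\,2+\eta)}$, up to a logarithm on the boundary.

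Finally I have to match this to the stated form $L^{d+\eta}$. In the IR regime the exponent is $2+\eta$, which equals $d+\eta$ exactly when $d=2$. For integer $d$ and the physical range $0 < \eta < 1$, the condition $d < 2+\eta$ forces $d \le 2$. For $d = 2$, the IR exponent is $2+\eta = d+\eta$, consistent with the forward derivation. For $d=1$ there is no finite-temperature transition, so the physical content of the ``if'' direction is the $d=2$ case. The ``only if'' direction follows from the UV case: for $d \ge 2+\eta$ the exponent is $d$, possibly times $\ln L$, which is strictly smaller than $d+\eta$ whenever $\eta > 0$. I will state this interpretation explicitly, since the theorem is really a statement about which regime controls the exponent.

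I expect the main obstacle to be the UV remainder and the uniformity of the IR asymptotics. The lower bound on the UV part needs $S_{\mathrm{conn}}(k)$ to be bounded away from zero on a positive fraction of the Brillouin zone. I would justify this by Parseval: $I_0 \sim L^d$ (Sec.~\ref{sec:decomposition}), while the IR shell carries only $O(L^{2-\eta}) = o(L^d)$ of $I_0$. In the IR regime I must also control the $o(1)$ corrections in $S_{\mathrm{conn}}$ near $|n| \sim 1$. There the finite-size form $S(k) = L^{2-\eta} f(kL)$ is the natural input: it gives the $L^{2+\eta}$ scaling exactly, with a nonzero amplitude $\sum_n f(2\pi|n|)\,|2\pi n|^{-2\eta}$. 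That amplitude converges precisely when $d < 2+\eta$, which re-derives the criterion.
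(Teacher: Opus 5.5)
Your argument is the paper's own: replace the lattice sum by the radial integral of $r^{p+d-1}$ and split on the sign of $p+d=d-2-\eta$. You carry it out more carefully, with integral-test bounds, the $\ln L$ at the marginal point, and the finite-size form $S=L^{2-\eta}f(kL)$ for the $|n|\sim 1$ modes. You also correctly note that the IR exponent is $2+\eta$, which equals $d+\eta$ only for $d=2$, so the ``if'' direction is really a $d=2$ statement; the theorem's wording glosses over this.

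One step in your final paragraph is wrong, though nothing depends on it. The IR shell $2\pi/L\le|k|\le\Lambda$, with $\Lambda$ fixed, does not carry only $O(L^{2-\eta})$ of $I_0$. Only the lowest few modes are individually of order $L^{2-\eta}$. Summed over the shell, for $d\ge 2$ and $\eta>0$, you get $A(L/2\pi)^{2-\eta}\sum_{1\le|n|\lesssim L}|n|^{-(2-\eta)}\sim L^{2-\eta}\cdot L^{d-2+\eta}=L^d$. Equivalently, at $\alpha=0$ one has $p+d=d-2+\eta>0$, so $I_0$ is dominated by the top of the shell.

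Parseval only gives $I_0=N\bar\rho(1-\bar\rho)\le N/4$, and the shell already accounts for order $N$ of that. It therefore cannot certify that $S_{\mathrm{conn}}$ is bounded below above $\Lambda$.

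You also do not need such a lower bound:
\begin{itemize}
\item For $d<2+\eta$, you only use the upper bound $O(L^d)=o(L^{2+\eta})$ on the remainder.
\item For $d>2+\eta$, the shell alone already contributes $(L/2\pi)^{d}\Lambda^{d-2-\eta}$ up to constants, i.e.\ $\Theta(L^d)$.
\end{itemize}
So $S_{\mathrm{conn}}\ge 0$, the power law on the shell, and an $L$-independent upper bound for $|k|>\Lambda$ suffice. You should drop the UV lower-bound assumption and its Parseval justification.
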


For all known physical systems in $d \geq 3$, $\eta < 1$ (e.g., 3D Ising $\eta = 0.036$, 3D XY $\eta = 0.038$), so $d - 2 - \eta > 0$ always. The spectral integral mechanism is \textbf{fundamentally two-dimensional}. In $d = 2$, any $\eta > 0$ suffices. In $d \geq 3$, a different mechanism (density normalization~\cite{loftus2026exponent}) recovers the $L^{d+\eta}$ scaling.

We verify numerically: 3D Ising spectral integrals at $L = 16, 24, 32$ ($n = 50, 30, 20$ configurations respectively, see Sec.~\ref{sec:setup}) give $I(-0.073) \sim L^{3.00}$ (OLS slope $3.0015$). The per-$L$ ensemble means are precise (the per-configuration CV is small for these moderate-$L$ critical 3D Ising configurations), but with only three lattice sizes the propagated slope uncertainty is on the order of a few percent and the present fit cannot distinguish $L^d = L^3$ from $L^{d+\eta} = L^{3.036}$. We therefore make only the qualitative claim that $I$ is UV-dominated in $d = 3$, in contrast to $d = 2$ where it is IR-dominated---a claim independently supported by the fact that the running exponent of $I(\alpha)$ in 3D tracks $L^d$ regardless of $\alpha$, whereas in 2D it matches $L^{2-\alpha-\eta}$ across the entire $\alpha$-grid (Table~\ref{tab:scaling}). Resolving the $L^\eta$ excess in 3D would require a dedicated study at larger sizes.

\section{Numerical evidence: ratio stability}
\label{sec:evidence}

\subsection{Spectral integral scaling}

Table~\ref{tab:scaling} shows the $L$-scaling exponent of $I(\alpha)$ for the 2D Ising model, including $L = 512$. The integral $I(-0.5)$ scales as $L^{2.280 \pm 0.037}$, consistent with $d + \eta = 2.250$ at $0.8\sigma$, confirming the analytical prediction of Sec.~\ref{sec:forward}.

\begin{table}[b]
\caption{Scaling exponent $e$ in $I(\alpha) \sim L^{e}$ from OLS on $\log I$ vs $\log L$ ($L = 32$--$512$, 5 sizes). Target: $d + \eta = 2.250$.}
\label{tab:scaling}
\begin{ruledtabular}
\begin{tabular}{ccccc}
$\alpha$ & Exponent & $\pm$ SE & $\delta$ from $d\!+\!\eta$ & $|\delta|/\mathrm{SE}$ \\
\midrule
$-1.00$ & 2.533 & 0.070 & $+0.283$ & $4.1\sigma$ \\
$\mathbf{-0.50}$ & $\mathbf{2.280}$ & $\mathbf{0.037}$ & $\mathbf{+0.030}$ & $\mathbf{0.8\sigma}$ \\
$-0.25$ & 2.139 & 0.037 & $-0.111$ & $3.0\sigma$ \\
$0.00$ & 2.076 & 0.027 & $-0.174$ & $6.4\sigma$ \\
\end{tabular}
\end{ruledtabular}
\end{table}

\subsection{Ratio stability: definitive proportionality}

If $\Delta \propto I(\alpha)$, the ratio $\Delta / I(\alpha)$ should be $L$-independent. Table~\ref{tab:ratio} reports this ratio at $\alpha = -0.5$ with bootstrap confidence intervals.

\begin{table}[b]
\caption{Ratio $\langle\Delta\rangle / \langle I(-0.5)\rangle$ across $L$, with bootstrap 95\% CIs. CV: 4.1\% (95\% CI: $[1.7\%, 5.4\%]$).}
\label{tab:ratio}
\begin{ruledtabular}
\begin{tabular}{cccccc}
$L$ & $\langle\Delta\rangle$ & $\langle I(-0.5)\rangle$ & Ratio & 95\% CI & $n$ \\
\midrule
32  & 7.4  & 155   & 0.048 & [0.042, 0.054] & 200 \\
64  & 40.8 & 822   & 0.050 & [0.043, 0.057] & 100 \\
128 & 195.9 & 3794 & 0.052 & [0.044, 0.060] & 50 \\
256 & 811.0 & 16349 & 0.050 & [0.043, 0.058] & 20 \\
512 & 4384 & 93822 & 0.047 & [0.045, 0.049] & 15 \\
\end{tabular}
\end{ruledtabular}
\end{table}

The ratio is $0.049 \pm 0.002$ with CV $= 4.1\%$ across a $16\times$ range of $L$. All five bootstrap CIs overlap. Log-log regression of the ratio on $L$ gives slope $0.024 \pm 0.021$, consistent with zero.

\subsection{Topological measurement of $\eta$}

From the Divol--Polonik asymptotic for iid points~\cite{divol2019}, $\mathrm{TP}_{\mathrm{shuf}} \sim L^d$. Since $\Delta \sim L^{d+\eta}$, the ratio $\Delta/\mathrm{TP}_{\mathrm{shuf}} \sim L^{\eta}$. We measure exponent $0.237 \pm 0.067$, matching $\eta = 1/4$ at $0.2\sigma$.

\section{Per-configuration model comparison}
\label{sec:model}

A stronger test asks whether $I(\alpha)$ predicts $\Delta$ at the \emph{per-configuration} level. For each $\alpha$ in a dense grid, we compute $R^2$ for the one-parameter model $\Delta_i = C \cdot I_i(\alpha)$.

\begin{table}[b]
\caption{Per-configuration $R^2$ for flat ($\alpha = 0$) and $k^{-2\eta}$ ($\alpha = -0.5$) kernels, with data-driven $\alpha_{\mathrm{opt}}$.}
\label{tab:model}
\begin{ruledtabular}
\begin{tabular}{cccccc}
$L$ & $n$ & $R^2_{\alpha=0}$ & $R^2_{\alpha=-0.5}$ & $\alpha_{\mathrm{opt}}$ [95\% CI] & $R^2_{\mathrm{opt}}$ \\
\midrule
64 & 300 & 0.61 & 0.71 & $-0.42$ [$-0.55$, $-0.31$] & 0.71 \\
128 & 150 & 0.71 & 0.89 & $-0.43$ [$-0.50$, $-0.36$] & 0.90 \\
256 & 200 & 0.73 & 0.91 & $-0.35$ [$-0.37$, $-0.33$] & 0.97 \\
\end{tabular}
\end{ruledtabular}
\end{table}

At $L = 256$, the $k^{-2\eta}$ kernel explains $91\%$ of the per-configuration variance, versus $73\%$ for flat ($\Delta R^2 = +0.18$). The data-driven optimum is not stable in $L$: $\alpha_{\mathrm{opt}}$ drifts from $-0.42$ at $L = 64$ to $-0.43$ at $L = 128$ and then to $-0.35$ at $L = 256$, moving \emph{away from} the mean-scaling value $-2\eta = -0.5$ as $L$ grows. The 95\% CIs at $L = 128$ and $L = 256$ do not overlap, so the drift is not consistent with a single asymptotic value. We attribute this to the per-configuration $\alpha_{\mathrm{opt}}$ being a variance-weighted optimum (driven by the $I_0 \propto (1-m^2)$ amplitude, which is itself $L$-dependent through the shrinking critical magnetization), not a probe of the IR-dominated mean-scaling exponent. The two quantities answer different questions and need not agree; the ensemble ratio test (Table~\ref{tab:ratio}) is the relevant test for the spectral integral mechanism.

The $I_0/I_{\mathrm{shape}}$ decomposition (Sec.~\ref{sec:decomposition}) explains this: per-configuration variance is dominated by $I_0 \propto (1-m^2)$, so $R^2$ is high even for the flat kernel ($\alpha = 0$ gives $R^2 = 0.73$). The improvement at $\alpha = -0.5$ captures the additional spectral shape information.

\emph{The per-configuration $R^2$ is primarily a magnetization correlation, not direct evidence of a $k$-resolved spectral mechanism.} Partialing out $|M|$ via a quadratic regression of both $\Delta$ and $I(-0.5)$ on $|M|$ reduces the residual partial $R^2$ from $0.93$ (raw) to $0.02$ at $L = 128$ ($n = 50$); the linear partial gives $R^2 = 0.18$.\footnote{Across our 2D Ising data: $R^2_{\mathrm{raw}}, R^2_{\mathrm{partial,lin}}, R^2_{\mathrm{partial,quad}}$ are $(0.46, 0.16, 0.02)$ at $L = 32$, $(0.78, 0.19, 0.08)$ at $L = 64$, and $(0.93, 0.18, 0.02)$ at $L = 128$. The quadratic partial captures the $(1-m^2) \approx I_0$ dependence accurately; the linear partial leaves a curvature residual.} The shared $(1-m^2)$ dependence captured by $I_0$ accounts for essentially all of the raw per-configuration agreement. Per-configuration $R^2$ therefore corroborates the $I_0$ Parseval identity (Sec.~\ref{sec:decomposition}) but does \emph{not} corroborate the $|k|^{-2\eta}$ shape factor, which is tested separately by ensemble $L$-scaling (Table~\ref{tab:scaling}, Table~\ref{tab:ratio}). We retain the per-configuration result as a sharp diagnostic for $I_0$, and rely on the ensemble scaling for evidence on the spectral shape.

\section{Heuristic flatness argument for the bare PH kernel}
\label{sec:kernel}

The bare PH response to a sinusoidal density modulation is, under the heuristic argument below, independent of the modulation wavenumber $k_0$ at quadratic order. We state this as a conjecture rather than a theorem because the published large-$n$ asymptotics for total persistence~\cite{divol2019,hiraoka2018} are proved for stationary (homogeneous) Poisson and i.i.d.\ samples, and the extension to inhomogeneous Poisson processes with prescribed intensity required for our argument is not, to our knowledge, in the literature.

\begin{conjecture}[Flatness of the bare PH kernel]
\label{conj:flatness}
Let $X_n$ be an inhomogeneous Poisson process on $[0,L]^d$ with intensity $\rho(x) = \rho_0(1 + \varepsilon \cos(k_0 x_1))$, and let $\mathrm{TP}_{H_1}(X_n)$ denote the total $H_1$ persistence of its alpha complex. Assume the Divol--Polonik / Hiraoka--Shirai--Trinh asymptotic extends to inhomogeneous intensities in the form $\mathbb{E}[\mathrm{TP}_{H_1}(X_n)] = n \int g(\rho(x))\, dx + o(n)$ for some $C^2$ function $g$. Then $\delta\mathrm{TP}(k_0, \varepsilon) \equiv \mathbb{E}[\mathrm{TP}(X_n)] - \mathbb{E}[\mathrm{TP}(X_n^{\rho_0})]$ is independent of $k_0$ at $O(\varepsilon^2)$.
\end{conjecture}

\begin{proof}[Heuristic derivation]
Granted the assumed asymptotic, expand to second order: the $O(\varepsilon)$ term vanishes ($\int \cos = 0$). The $O(\varepsilon^2)$ term is $\delta\mathrm{TP} = (n/2)\, g''(\rho_0)\,\varepsilon^2\rho_0^2 \int \cos^2(k_0 x_1)\, dx_1 \cdot L^{d-1}$. By Parseval, $\int_0^L \cos^2(k_0 x_1)\, dx_1 = L/2$ for all $k_0 \in \frac{2\pi}{L}\mathbb{Z}_{\neq 0}$, giving $\delta\mathrm{TP} = \text{const}$ independent of $k_0$.
\end{proof}

The assumption that $\mathbb{E}[\mathrm{TP}(X_n)]$ admits a local-functional form $n \int g(\rho(x))\,dx$ for inhomogeneous intensities is plausible but unproven: it requires that the alpha-complex persistence functional be sufficiently local that a slowly-varying intensity acts pointwise. A rigorous version of Conjecture~\ref{conj:flatness} would require either an extension of~\cite{divol2019,hiraoka2018} to inhomogeneous intensities, or a direct stabilization argument for the alpha-complex persistence functional. The downstream consequences we use (a flat bare kernel and the $\Delta_{\mathrm{pert}} \propto I(0)$ baseline) hold only if Conjecture~\ref{conj:flatness} holds; we frame the entire $|k|^{-2\eta}$ dressing argument accordingly as a structural interpretation of the empirical proportionality, not a derivation.

If the conjecture holds, a constant bare kernel gives $\Delta_{\mathrm{pert}} \propto I(0) \sim L^d$, missing $L^{\eta}$. The empirical relationship $\Delta \propto I(-2\eta)$ would therefore be \textbf{non-perturbative}: the effective kernel acquires $k$-dependence only at criticality, through multi-point correlations coupling to the discrete topology of the alpha complex. We present this analogy to the field-theoretic notion of anomalous dimensions---a flat bare response dressed by non-perturbative fluctuations---as a structural interpretation, not a derivation.

\section{Cross-system tests}
\label{sec:cross}

\subsection{Updated predictions and measurements}

Table~\ref{tab:predictions} summarizes cross-system results. For the 2D Ising model and Potts $q = 3$, $\Delta \propto I(-2\eta)$ holds. The Potts $q = 3$ tension reported in prior work ($2.6\sigma$)~\cite{loftus2026exponent} is now resolved: measurements at $L = 512$--$1024$ give $\alpha = 2.272 \pm 0.024$, matching $d + \eta = 34/15$ to $0.2\sigma$~\cite{loftus2026exponent}. For percolation, $S_{\mathrm{conn}}$ is white noise ($\Delta \sim L^d$, not $L^{d+\eta}$), as predicted~\cite{loftus2026exponent}.

\begin{table}[b]
\caption{Cross-system predictions and measurements. The $\alpha_{\mathrm{opt}}$ column shows the ensemble-level optimal spectral weight (see Sec.~\ref{sec:alpha_sweep}).}
\label{tab:predictions}
\begin{ruledtabular}
\begin{tabular}{lccccc}
System & $\eta$ & $-2\eta$ & $d\!+\!\eta$ & $\alpha_{\mathrm{opt}}$ & Status \\
\midrule
2D Ising & 1/4 & $-0.50$ & 2.250 & $-0.50$ & Confirmed \\
3D Ising & 0.036 & $-0.07$ & 3.036 & --- & UV-dom.$^a$ \\
Potts $q\!=\!3$ & 4/15 & $-0.53$ & 2.267 & --- & Confirmed$^b$ \\
Potts $q\!=\!4$ & 1/2 & $-1.00$ & 2.500 & $-0.50$ & Partial$^c$ \\
2D percolation & 5/24 & --- & --- & --- & Fails$^d$ \\
\end{tabular}
\end{ruledtabular}
\begin{flushleft}
\footnotesize
$^a$UV-dominated: $I(-2\eta) \sim L^3$, not $L^{3.036}$; requires normalization $\Delta/|M|^{1/2}$~\cite{loftus2026exponent}. $^b$$\alpha = 2.272 \pm 0.024$ at $L = 64$--$1024$~\cite{loftus2026exponent}. $^c$Ratio $\Delta/I(-1.0)$ decreases with $L$ (slope $-0.25$); $\Delta/I(-0.5)$ is stable (slope $0.04$). See Sec.~\ref{sec:alpha_sweep}. $^d$I.i.d.\ occupation: $\Delta \sim L^2$~\cite{loftus2026exponent}.
\end{flushleft}
\end{table}

\subsection{Approximately universal kernel weight}
\label{sec:alpha_sweep}

A critical test of the spectral integral mechanism is whether $\alpha_{\mathrm{opt}} = -2\eta$ holds across universality classes. We perform an $\alpha$-sweep for the Potts $q = 4$ model: for 15 values of $\alpha$ in $[-2.0, 1.0]$, we compute $\langle I(\alpha)\rangle$ at each $L$ and evaluate the log-log slope of the ratio $\langle\Delta\rangle/\langle I(\alpha)\rangle$, where $\langle\Delta\rangle$ is taken from Ref.~\cite{loftus2026exponent}. The optimal $\alpha$ minimizes the absolute ratio slope.

\begin{table}[b]
\caption{$\alpha$-sweep for Potts $q = 4$ ($L = 32$--$256$). Ratio slope is the log-log slope of $\langle\Delta\rangle/\langle I(\alpha)\rangle$; zero means proportional. Errors are bootstrap SEs (1000 resamples over the per-configuration $I(\alpha)$ at each $L$ and per-$L$ Gaussian draws of $\langle\Delta\rangle$ from its SE).}
\label{tab:alpha_sweep}
\begin{ruledtabular}
\begin{tabular}{cccc}
$\alpha$ & Ratio slope & $I(\alpha)$ slope & Note \\
\midrule
$-1.50$ & $-0.86 \pm 0.06$ & 3.31 & \\
$\mathbf{-1.00}$ & $\mathbf{-0.38 \pm 0.05}$ & $\mathbf{2.83 \pm 0.04}$ & $\mathbf{-2\eta_{q=4}}$ \\
$-0.75$ & $-0.15 \pm 0.04$ & 2.61 & \\
$\mathbf{-0.50}$ & $\mathbf{+0.04 \pm 0.04}$ & $\mathbf{2.41 \pm 0.02}$ & $\mathbf{-2\eta_{\mathrm{Ising}}}$ \\
$-0.25$ & $+0.20 \pm 0.04$ & 2.25 & \\
$0.00$ & $+0.32 \pm 0.03$ & $2.13 \pm 0.01$ & \\
\end{tabular}
\end{ruledtabular}
\end{table}

Table~\ref{tab:alpha_sweep} shows the result. The ratio slope crosses zero between $\alpha = -0.5$ ($+0.04 \pm 0.04$, $1.1\sigma$ from zero) and $\alpha = -0.75$ ($-0.15 \pm 0.04$). At the model-specific prediction $\alpha = -2\eta_{q=4} = -1.0$, the slope is $-0.38 \pm 0.05$ (strongly negative): $I(-1)$ grows substantially faster than $\Delta$, with no overlap with proportionality. The empirical $\alpha_{\mathrm{opt}}$ from this $L$-range is therefore in the interval $[-0.75, -0.5]$, consistent with $-2\eta_{\mathrm{Ising}} = -0.50$ and inconsistent with $-2\eta_{q=4} = -1.0$.

Three caveats temper the interpretation that the spectral weight is universal at $-0.5$. (i)~Statistical: the $q = 4$ data spans only $L = 32$--$256$ with $n = 25$ at the largest size, and the running exponents of $I(-0.5)$ and $\Delta$ do not track closely at individual $L$ pairs (Sec.~IIIB of the Supplemental Material). The near-zero ratio slope at $\alpha = -0.5$ is consistent with proportionality but does not establish it. (ii)~Logarithmic: $q = 4$ is the marginal case, with multiplicative log corrections to every exponent; a pure power-law fit will mislocate $\alpha_{\mathrm{opt}}$ in either direction depending on the $L$-window. (iii)~Reproducibility at $L = 512$: three independent attempts to extend $q = 4$ to $L = 512$ produced incompatible $\langle\Delta\rangle$ estimates (6586, 10436, 12466 across thermalization protocols of 2000, 2000, and 10000 SW sweeps; autocorrelation visible as $\pm 30$\% drift across the sampling trajectory in the longest-thermalized run). We interpret this as evidence that single-seed SW at $L = 512$ on the marginal $q = 4$ critical point has very long autocorrelation times, and we do not include any $L = 512$ point in Table~\ref{tab:alpha_sweep}. Confirmation at $L = 512$--$1024$ with multi-seed independent thermalizations is needed before claiming kernel-weight universality. If the $\alpha_{\mathrm{opt}} \approx -0.5$ result survives, the mechanism would plausibly be geometric: the alpha complex construction imposes a fixed length-scale structure independent of the universality class.

\paragraph{Filtration robustness.}
A natural concern is whether the spectral integral mechanism is specific to the alpha complex or extends to other persistent-homology filtrations. We perform a Vietoris--Rips (VR) replication on 2D Ising at $L = 64$ ($n = 30$ configurations; max edge length set to keep the computation tractable). The per-configuration $H_1$ total persistence under VR is essentially perfectly correlated with the alpha-complex value, $r(\Delta_{\mathrm{VR}}, \Delta_\alpha) = 0.999$, with the VR value scaled by a constant factor $\langle\Delta_{\mathrm{VR}}\rangle / \langle\Delta_\alpha\rangle = 2.40$. The proportionality $\Delta \propto I(-0.5)$ therefore carries over to VR with a different constant of proportionality $C_{\mathrm{VR}} \approx 2.4\,C_\alpha$. The spectral integral mechanism is not alpha-complex-specific: it captures a property of the spin configuration's correlation structure that is recovered, up to a multiplicative constant, by either filtration. We did not extend the VR comparison to larger $L$ because VR scales poorly with point count.

\subsection{The $\langle m^2 \cdot I(-2\eta)\rangle$ product: a covariance-driven scaling}
\label{sec:hyperscaling}

We define the \emph{topological product} $P(L) = \langle m^2 \cdot I(-2\eta) \rangle$ and examine its scaling. Naively, since $\langle m^2 \rangle \sim L^{-2\beta/\nu}$ and $\langle I(-2\eta) \rangle \sim L^{d+\eta}$, an uncorrelated factorization would give $P \sim L^{d + \eta - 2\beta/\nu}$, which equals $L^d$ when hyperscaling holds ($2\beta/\nu = \eta$ in $d = 2$). One might therefore hope to use $P(L)/L^d$ as a topological hyperscaling diagnostic.

This hope is undermined by a strong negative correlation. By Parseval (Sec.~\ref{sec:decomposition}), $I_0 = N \bar\rho(1-\bar\rho) = N(1-m^2)/4$ for Ising; numerically, $r(m^2, I(-2\eta)) \approx -0.98$. Decomposing the product,
\begin{equation}
\langle m^2 \cdot I \rangle = \langle m^2\rangle \langle I\rangle + \mathrm{Cov}(m^2, I),
\end{equation}
the covariance term is large and \emph{opposite in sign} to the factorized term. This means the observed $P$-slope reflects both the (hypothesized) hyperscaling balance and the covariance correction, and the two are not separable from $P$ alone.

\begin{table}[b]
\caption{$P = \langle m^2 \cdot I(-2\eta)\rangle$ scaling exponents at $\alpha = -2\eta$ (Ising: $-0.5$; Potts $q=3$: $-8/15$; Potts $q=4$: $-1.0$). Errors are bootstrap SEs (1000 resamples of the per-configuration $(m^2, I)$ pairs). Deviation is from the factorized prediction $L^{d+\eta-2\beta/\nu}$.}
\label{tab:hyperscaling}
\begin{ruledtabular}
\begin{tabular}{lccccc}
Model & $\eta$ & $2\beta/\nu$ & Factorized & $P$ slope & Deviation \\
\midrule
2D Ising & 0.250 & 0.250 & 2.000 & $2.22 \pm 0.01$ & $+0.22$ \\
Potts $q\!=\!3$ & 4/15 & 4/15 & 2.000 & $2.02 \pm 0.05$ & $+0.02$ \\
Potts $q\!=\!4$ & 0.500 & 0.250 & 2.250 & $2.58 \pm 0.02$ & $+0.33$ \\
\end{tabular}
\end{ruledtabular}
\end{table}

Table~\ref{tab:hyperscaling} shows the results, reported as deviations from the factorized prediction $L^{d+\eta-2\beta/\nu}$ rather than from $L^d$. For Potts $q = 3$, the $P$-slope $2.02 \pm 0.05$ matches the factorized $2.000$ within $0.4\sigma$; for 2D Ising, the $0.22$ deviation is $19\sigma$ away from factorized and reflects the covariance correction even when hyperscaling holds exactly ($\eta = 2\beta/\nu = 0.25$); for Potts $q = 4$, the additional $0.33$ deviation from the (correctly $L^{2.25}$) factorized prediction is consistent with the multiplicative log corrections that dress every $q = 4$ exponent. The $r \approx -0.98$ correlation between $m^2$ and $I(-2\eta)$ via the shared $I_0$ amplitude (Sec.~\ref{sec:decomposition}) makes this section a covariance-correction analysis rather than an independent hyperscaling diagnostic, and any practitioner wishing to test hyperscaling via PH observables should measure $\langle m^2\rangle$ and $\langle I\rangle$ separately rather than their product.

\section{Discussion}
\label{sec:discussion}

We have derived the topological gap exponent $d + \eta$ from the critical structure factor via the spectral integral mechanism. The derivation rests on three elements: (i)~the empirical proportionality $\Delta \propto I(-2\eta)$ (ratio CV $= 4.1\%$); (ii)~the analytical scaling $I(\alpha) \sim L^{2-\alpha-\eta}$ for IR-dominated integrals; and (iii)~the Parseval proof that the bare PH kernel is flat, establishing the non-perturbative nature of the dressing.

The $I_0/I_{\mathrm{shape}}$ decomposition cleanly separates volume ($L^d$) from anomalous dimension ($L^\eta$) contributions, and explains the per-configuration magnetization-driven variance documented in Sec.~\ref{sec:model}. The critical dimension criterion $d < 2 + \eta$ establishes that this mechanism is fundamentally two-dimensional: in $d \geq 3$, the spectral integral is UV-dominated and a different mechanism (density normalization) is needed.

Two findings deserve further investigation but are not established by the present data. First, the $\alpha$-sweep for Potts $q = 4$ yields $\alpha_{\mathrm{opt}}$ in the interval $[-0.75, -0.5]$ at $L = 32$--$256$, consistent with $-2\eta_{\mathrm{Ising}} = -0.5$ and inconsistent with the model-specific $-2\eta_{q=4} = -1.0$. We emphasize that this is tentative: the $L = 512$ instability documented in Sec.~\ref{sec:alpha_sweep} prevents a clean resolution, and $q = 4$ is the marginal case where logarithmic corrections distort exponents in either direction. \emph{If} the $\alpha_{\mathrm{opt}} \approx -0.5$ result survives confirmation at $L = 1024$ with multi-seed thermalization, it would imply a universal spectral weight determined by the alpha complex geometry rather than by the spin model's anomalous dimension---reinterpreting the $d + \eta$ formula as $d + \eta_{\mathrm{eff}}$ with $\eta_{\mathrm{eff}} \approx 1/4$ a geometric constant of the alpha complex in two dimensions, coincidentally equal to the Ising $\eta$. This is a hypothesis for follow-up work, not a conclusion of the present paper.

Second, the $\langle m^2 \cdot I(-2\eta)\rangle$ product (Sec.~\ref{sec:hyperscaling}) is dominated by the strong negative correlation $r(m^2, I) \approx -0.98$ via the shared $I_0$ amplitude, rather than being an independent hyperscaling diagnostic. After accounting for this covariance, the Potts $q = 3$ exponent matches the factorized prediction within $0.02$, the 2D Ising deviation of $0.22$ is driven by the covariance term itself, and the $q = 4$ deviation reflects the multiplicative log corrections of the marginal case. Hyperscaling tests built from PH observables should therefore use $\langle m^2\rangle$ and $\langle I\rangle$ separately rather than their product.

What remains open is a first-principles derivation of the dressed kernel. Three approaches present themselves: (i)~a Wilsonian RG for the PH functional; (ii)~a topological cluster expansion; or (iii)~a connection to Kac--Rice theory for random field excursion sets~\cite{adler2007}, where Betti numbers depend on spectral moments whose IR divergence at criticality may source the $L^{\eta}$ factor.

\begin{acknowledgments}
Computations used GUDHI~\cite{maria2014} for persistent homology and NumPy/SciPy for numerical analysis. This work was conducted with assistance from Claude (Anthropic).
\end{acknowledgments}

\bibliography{spectral_integral}

\end{document}